\newtheorem{definition}{Definition}
\newtheorem{theorem}{Theorem}
\begin{document}

\title{Marginal and simultaneous predictive classification using stratified graphical models}
\author{Henrik Nyman$^{1, \ast}$, Jie Xiong$^{2}$, Johan Pensar$^{1}$, Jukka Corander$^{2}$ \\
$^{1}$Department of Mathematics and Statistics, \AA bo Akademi University, Finland \\
$^{2}$Department of Mathematics and Statistics, University of Helsinki, Finland \\
$^{\ast}$Corresponding author, Email: hennyman@abo.fi}
\date{}

\maketitle

\begin{abstract}
An inductive probabilistic classification rule must generally obey the principles of Bayesian predictive inference, such that all observed and unobserved stochastic quantities are jointly modeled and the parameter uncertainty is fully acknowledged through the posterior predictive distribution. Several such rules have been recently considered and their asymptotic behavior has been characterized under the assumption that the observed features or variables used for building a classifier are conditionally independent given a simultaneous labeling of both the training samples and those from an unknown origin. Here we extend the theoretical results to predictive classifiers acknowledging feature dependencies either through graphical models or sparser alternatives defined as stratified graphical models. We also show through experimentation with both synthetic and real data that the predictive classifiers based on stratified graphical models have consistently best accuracy compared with the predictive classifiers based on either conditionally independent features or on ordinary graphical models.
\end{abstract}

\noindent Keywords: Classification; Context-specific independence; Graphical model; Predictive inference.

\section{Introduction}
\label{sec:intro}
Supervised classification is one of the most common tasks considered in machine learning and statistics \citep{Ripley96, Duda00, Hastie09, Bishop07pattern}, with a wide variety of applications over practically all fields of science and engineering. Today, there exists a myriad of different classification methods, out of which those based on probabilistic models are widely accepted as the most sensible way to solve classification problems. Probabilistic methods are often themselves classified as either generative or discriminative, depending on whether one directly models the class posterior distribution (discriminative classifiers) or first the joint distribution of observed features (variables) conditional on class training data and then the posterior distribution of labels is obtained through Bayes' rule. There has been a debate around which of these approaches should be preferred in a particular application, see \cite{Ripley96}, \cite{Hastie09}, \cite{Bishop07pattern}, and \cite{Pernkopf05}, however, both classes of methods continue to be supported and further developed. One of the popular methods of probabilistic classification is based on encoding feature dependencies with Bayesian networks \citep{Friedman97}. Such models can often represent data structures more faithfully than the naive Bayes classifier, which has been shown to yield dramatic improvements in classification accuracy in some cases. Numerous variants and extensions of the original framework introduced by \cite{Friedman97} have been considered over the years, e.g. \cite{Keogh99}, \cite{Pernkopf05}, \cite{Su06}, \cite{Cerquides05}, \cite{Madden09}, and \cite{Holmes08}. \cite{Friedman97} concluded that general Bayesian networks did not perform better than the naive Bayes classifier, however, later \cite{Madden09} showed that this suboptimal behavior was attributable to the maximum likelihood estimation of the parameters used by \cite{Friedman97} and when the parameter estimates were smoothed with a prior, the classification accuracy of the models was dramatically improved. 

Albeit the above mentioned classifiers are occasionally called predictive, they are not predictive methods in the sense of \cite{Geisser64, Geisser66, Geisser93}, who considered the foundations of general Bayesian predictive inference. Truly predictive generative classifiers need typically to model also the joint predictive distribution of the features, which leads to an infinite mixture over the parameter space when uncertainty about generating model parameters is characterized through their posterior distribution. In addition, as shown by \cite{Corander13c, Corander13a, Corander13b}, depending on the loss function employed for the classification task, genuinely inductive predictive classifiers may also require that all the data are predictively classified in a simultaneous fashion. This is in contrast with the standard classification methods which beyond the training data handle each sample independently and separately from others, which was termed marginal classification in \cite{Corander13a}. Simultaneous classifiers are therefore computationally much more demanding, because they necessitate modeling of the joint posterior-predictive distribution of the unknown sample labels. 

It appears that the theory of simultaneous predictive classification is not widely known in the general statistical or machine learning literature. To the best of our knowledge, none of the Bayesian network, or more generally graphical model based classifiers introduced earlier, are strictly Bayesian predictive in the meaning of \cite{Geisser93}. However, in speech recognition the theoretical optimality of predictive simultaneous classifiers was notified already by \cite{Nadas85}. Later work has demonstrated their value in several speech recognition applications, see, e.g. \cite{Huo00} and \cite{Maina11}. Also, \cite{Ripley88} discussed the enhanced performance of simultaneous classifiers for statistical image analysis, although not in the posterior predictive sense. 

\cite{Corander13c, Corander13a, Corander13b} considered the situation where features are assumed conditionally independent, given a joint simultaneous labeling of all the samples of unknown origin that are to be classified. Even if these samples were generated independently from the same underlying distributions, their labels are not in general independent in the posterior predictive distribution. Here we extend the inductive classifier learning to a situation where the feature dependencies are encoded either by ordinary graphical models, or by a recently introduced class of sparser stratified graphical models. We show that the results of \cite{Corander13a}, concerning the asymptotic equality of simultaneous and marginal classifiers when the amount of training data tends to infinity, generalize to the situation with an arbitrary Markov network structure for the features in each class. Moreover, it is also shown that the asymptotic equality holds between graphical and stratified graphical models as well. For finite training data, we demonstrate that considerable differences in classification accuracy may arise between predictive classifiers built under the assumptions of empty graphs (predictive naive Bayes classifier), ordinary graphical models and stratified graphical models. The sparse stratified graphical models tend to consistently yield the best performance in our experiments. 

The remainder of this article is structured as follows. In Section \ref{sec:SGM} we give a short introduction to theory involving graphical models and stratified graphical models. Section \ref{sec:ML} contains the theory needed to calculate the marginal likelihood of a dataset given an SGM, to be used in Section \ref{sec:classifier} to define marginal and simultaneous SGM classifiers. In Section \ref{sec:results} these novel types of classifiers are compared to classifiers utilizing a GM structure as well as to the predictive naive Bayes classifier using a range of synthetic and real datasets. Some general remarks and comments are given in the last section while proofs of theorems and certain technical details are provided in the Appendix and through online supplementary materials.

\section{Stratified graphical models}
\label{sec:SGM}
In this section we give a short introduction to graphical models (GMs) and in particular stratified graphical models (SGMs). For a comprehensive account of the statistical and computational theory of these models, see \cite{Whittaker90}, \cite{Lauritzen96}, \cite{Koller09}, and \cite{Nyman13}.

Let $G(\Delta,E)$, be an undirected graph, consisting of a set of nodes $\Delta$ and of a set of undirected edges $E\subseteq\{\Delta \times \Delta\}$. For a subset of nodes $A \subseteq \Delta$, $G_{A}=G(A,E_{A})$ is a subgraph of $G$, such that the nodes in $G_{A}$ are equal to $A$ and the edge set comprises those edges of the original graph for which both nodes are in $A$, i.e. $E_{A} = \{A \times A\} \cap E$. Two nodes $\gamma$ and $\delta$ are \textit{adjacent} in a graph if $\{\gamma, \delta\}\in E$, that is an edge exists between them. A \textit{path} in a graph is a sequence of nodes such that for each successive pair within the sequence the nodes are adjacent. A \textit{cycle} is a path that starts and ends with the same node. A \textit{chord} in a cycle is an edge between two non-consecutive nodes in the cycle. Two sets of nodes $A$ and $B$ are said to be \textit{separated} by a third set of nodes $S$ if every path between nodes in $A$ and nodes in $B$ contains at least one node in $S$. A graph is defined as \textit{complete} when all pairs of nodes in the graph are adjacent.

A graph is defined as \textit{decomposable} if there are no chordless cycles containing four or more unique nodes. A \textit{clique} in a graph is a set of nodes $C$ such that the subgraph $G_{C}$ is complete and there exists no other set
$C^*$ such that $C \subset C^*$ and $G_{C^*}$ is also complete. The set of cliques in the graph $G$ will be denoted by $\mathcal{C}(G)$. The set of \textit{separators}, $\mathcal{S}(G)$, in the decomposable graph $G$ can be obtained through intersections of the cliques of $G$ ordered in terms of a junction tree, see e.g. \cite{Golumbic04}. 

Associating each node $\delta \in \Delta$ with a stochastic feature, or equivalently variable, $X_{\delta}$, a GM is defined by the pair $G=G(\Delta,E)$ and a joint distribution $P_{\Delta}$ over the variables $X_{\Delta}$ satisfying a set of restrictions induced by $G$. In the remainder of the text we use the terms \textit{feature} and \textit{variable} interchangeably. The outcome space for the variables $X_A$, where $A \subseteq \Delta$, is denoted by $\mathcal{X}_{A}$ and an element in this space by $x_{A} \in \mathcal{X}_{A}$. It is assumed throughout this paper that all considered variables are binary. However, the introduced theory can readily be extended to categorical discrete variables with larger than dichotomous outcome spaces. 

Given the graph of a GM, it is possible to ascertain if two sets of random variables $X_A$ and $X_B$ are marginally or conditionally independent. If there exists no path from a node in $A$ to a node in $B$ the two sets of variables are marginally independent, i.e. $P(X_A, X_B) = P(X_A) P(X_B)$. Similarly the variables $X_A$ and $X_B$ are conditionally independent given a third set of variables $X_S$, $P(X_A, X_B \mid X_S) = P(X_A \mid X_S) P(X_B \mid X_S)$ , if $S$ separates $A$ and $B$ in $G$. In addition to marginal and conditional independencies, SGMs allow for the introduction of context-specific independencies. Using SGMs, two variables $X_{\delta}$ and $X_{\gamma}$ may be independent given that a specific set of variables $X_A$ assume a certain outcome $x_A$, i.e. $P(X_{\delta}, X_{\gamma} \mid X_A = x_A) = P(X_{\delta} \mid X_A = x_A) P(X_{\gamma} \mid X_A = x_A)$. The set of outcomes for which such a context-specific independence holds is referred to as a \textit{stratum}.

\begin{definition}[Stratum]
\label{stratum}
Let the pair $(G, P_{\Delta})$ be a GM. For all $\{\delta, \gamma\} \in E$, let $L_{\{\delta, \gamma\}}$ denote the set of nodes adjacent
to both $\delta$ and $\gamma$. For a non-empty $L_{\{\delta, \gamma\}}$, define the stratum of the edge  $\{\delta, \gamma\}$ as the subset $\mathcal{L}_{\{\delta, \gamma\}}$ of outcomes $x_{L_{\{\delta, \gamma\}}} \in \mathcal{X}_{L_{\{\delta, \gamma\}}}$ for which $X_{\delta}$ and $X_{\gamma}$ are independent given $X_{L_{\{\delta, \gamma\}}} = x_{L_{\{\delta, \gamma\}}}$, i.e. $\mathcal{L}_{\{\delta, \gamma\}} = \{ x_{L_{\{\delta, \gamma\}}} \in  \mathcal{X}_{L_{\{\delta, \gamma\}}} : X_{\delta} \perp X_{\gamma} \mid X_{L_{\{\delta, \gamma\}}} =  x_{L_{\{\delta, \gamma\}}} \}$.
\end{definition}
\begin{figure}[htb]
\begin{center}
\includegraphics{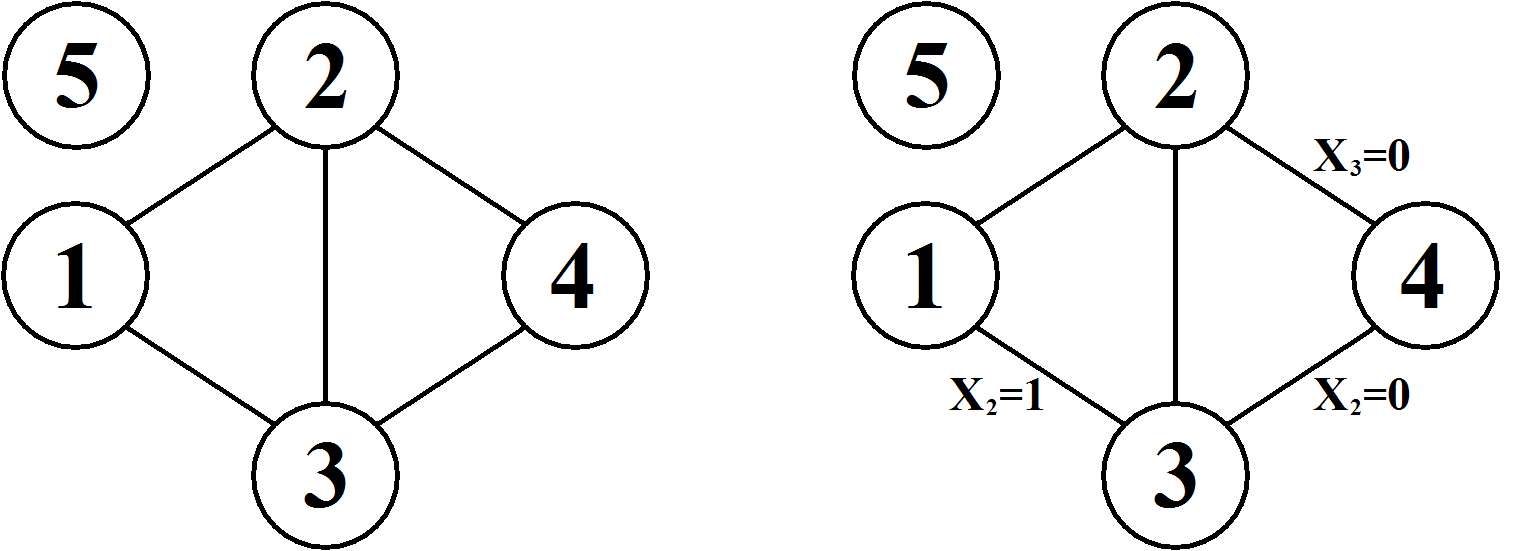}
\end{center}
\caption{In (\textbf{a}) a graphical model and in (\textbf{b}) a stratified graphical model.}
\label{fig:GMSGM}
\end{figure}
A stratum can be represented graphically by adding conditions to an edge in a graph as shown in Figure \ref{fig:GMSGM}b. The graph in Figure \ref{fig:GMSGM}a induces both marginal and conditional independencies, for instance $X_1 \perp X_5$ and $X_1 \perp X_4 \mid X_2, X_3$. In addition the graph in Figure \ref{fig:GMSGM}b induces the context-specific independencies $X_1 \perp X_3 \mid X_2 = 1$, $X_2 \perp X_4 \mid X_3 = 0$, and $X_3 \perp X_4 \mid X_2 = 0$.

\begin{definition} [Stratified graphical model]
An SGM is defined by the triple $(G, L, P_{\Delta})$, where $G$ is the underlying graph, $L$ equals the joint collection of all strata $\mathcal{L}_{\{\delta,\gamma\}}$ for the edges of $G$, and $P_{\Delta}$ is a joint distribution on $\Delta$ which factorizes according to the restrictions imposed by $G$ and $L$.
\end{definition}
The pair $(G, L)$ consisting of the graph $G$ with the stratified edges (edges associated with a stratum) determined by $L$ will be referred to as a stratified graph (SG), usually denoted by $G_L$. When the collection of strata, $L$, is empty, $G_{L}$ equals $G$.

Given a decomposable graph the marginal likelihood of a dataset can be calculated analytically. To extend this property to SGs, we introduce the concept \textit{decomposable} SG. Consider an SG with a decomposable underlying graph $G$ having the cliques $\mathcal{C}(G)$ and separators $\mathcal{S}(G)$. The SG is defined as decomposable if no strata are assigned to edges in any separator and in every clique all stratified edges have at least one node in common.
\begin{definition}[Decomposable SG]
Let $(G, L)$ constitute an SG with $G$ being decomposable. Further, let $E_{L}$ denote the set of all stratified edges, $E_{C}$ the set of all edges in clique $C$, and $E_{\mathcal{S}}$ the set of all edges in the separators of $G$. The SG is defined as decomposable if
\[
E_{L}\cap E_{\mathcal{S}}=\emptyset,
\]
and
\[
E_{L} \cap E_{C} = \emptyset \hspace{0.4cm} \text{ or } \hspace{0.0cm} \bigcap_{\{\delta,\gamma\}\in E_{L}\cap E_{C}} \hspace{-0.5cm} \{\delta,\gamma\} \hspace{0.1cm} \neq \hspace{0.1cm} \emptyset \hspace{0.1cm} \text{ for all } \hspace{0.1cm} C \in \mathcal{C}(G).
\]
\end{definition}
An SGM where $(G, L)$ constitutes a decomposable SG is termed a decomposable SGM. The SG in Figure \ref{fig:GMSGM}b is an example of a decomposable SG.

\section{Calculating the marginal likelihood of a dataset given an SGM}
\label{sec:ML}
Let $\mathbf{X}$ denote a data matrix consisting of $n$ binary vectors, each containing $|\Delta|$ elements. Using $\mathbf{X}_{A}$ we denote the subset of $\mathbf{X}$ for the variables in $A$. For an arbitrary decomposable graph $G$, under a prior distribution which enjoys the hyper-Markov property \citep{Dawid93}, the marginal likelihood of the dataset $\textbf{X}$ factorizes as
\begin{equation}
\label{eq:cs}
P(\mathbf{X} \mid G)=\frac{\prod_{C\in\mathcal{C}(G)}P_{C}(\mathbf{X}_{C})}
{\prod_{S\in\mathcal{S}(G)}P_{S}(\mathbf{X}_{S})},
\end{equation}
where $\mathcal{C}(G)$ and $\mathcal{S}(G)$ are the cliques and separators, respectively, of $G$. For any subset $A \subseteq \Delta$ of nodes, $P_{A}(\mathbf{X}_{A})$ denotes the marginal likelihood of the subset $\mathbf{X}_{A}$ of data. \cite{Nyman13} showed that this factorization can also be applied for decomposable SGs.

\cite{Nyman13} derived a formula for calculating $P_{C}(\mathbf{X}_{C})$ and $P_{S}(\mathbf{X}_{S})$, which is applicable to both ordinary graphs and SGs. Their derivation is based on introducing a specific ordering of the clique variables and merging some conditional distributions, for an example illustrating this procedure see Appendix A. The result is a modified version of the formula introduced by \cite{Cooper92} for the marginal likelihood of a Bayesian network, leading to the following expression for the clique marginal likelihood
\begin{equation}
\label{eq:ML}
P_{C}(\mathbf{X}_{C}) = \prod_{j = 1}^{d} \prod_{l = 1}^{q_{j}} \frac{\Gamma(\sum_{i = 1}^{k_{j}} \alpha_{jil})}{\Gamma(n(\pi_{j}^{l}) + \sum_{i = 1}^{k_{j}} \alpha_{jil})} \prod_{i = 1}^{k_{j}} \frac{\Gamma(n(x_{j}^{i} \mid \pi_{j}^{l}) + \alpha_{jil})}{\Gamma(\alpha_{jil})},
\end{equation}
where $\Gamma$ denotes the gamma function, $d$ equals the number of variables in the clique $C$, $q_{j}$ is the number of \textit{distinguishable} parent combinations for variable $X_j$ (i.e. there are $q_j$ distinct conditional distributions for variable $X_j$), $k_{j}$ is the number of possible outcomes for variable $X_j$, $\alpha_{jil}$ is the hyperparameter used in a Dirichlet prior distribution corresponding to the outcome $i$ of variable $X_j$ given that the outcome of the parents of $X_j$ belongs to group $l$, $n(\pi_{j}^{l})$ is the number of observations of the combination $l$ for the parents of variable $X_j$, and finally, $n(x_{j}^{i} \mid \pi_{j}^{l})$ is the number of observations where the outcome of variable $X_j$ is $i$ given that the observed outcome of the parents of $X_j$ belongs to $l$. Note that in this context a parent configuration $l$ is not necessarily comprised of a single outcome of the parents of variable $X_j$, but rather a \textit{group} of outcomes with an equivalent effect on $X_j$ (see Appendix A). 

The hyperparameters of the Dirichlet prior distribution can be chosen relatively freely, here we use 
\[
\alpha_{jil}=\frac{N\cdot\lambda_{jl}}{\pi_{j}\cdot k_{j}},
\]
where $N$ is the equivalent sample size, $\pi_{j}$ is the total number of possible outcomes for the parents of variable $X_j$ ($= 1$ for $X_1$) and $k_{j}$ is the number of possible outcomes for variable $X_j$. Further, $\lambda_{jl}$ equals the number of outcomes for the parents of variable $X_j$ in \textit{group} $l$ with an equivalent effect on $X_j$, if $X_j$ is the last variable in the ordering. Otherwise, $\lambda_{jl}$ equals one. Using \eqref{eq:ML} the values $P_{S}(X_{S})$ can also be calculated, as can $P_{C}(X_{C})$ and $P_{S}(X_{S})$ for ordinary GMs. For these instances each group $l$ consists of a single outcome of the parents of variable $X_j$.

\section{Predictive SGM Classifier}
\label{sec:classifier}
SGMs are now employed to define a novel type of predictive classifier, which acknowledges dependencies among variables but can also encode additional sparsity when this is supported by the training data. In comparison to GMs, SGMs allow for a more detailed representation of the dependence structure, thus enhancing the classification process. We assume that the dependence structure can freely vary across different classes. Let $\mathbf{X}^R$, consisting of $m$ observations on $|\Delta|$ features, constitute the training data over $K$ classes. The class labels for the observations in $\mathbf{X}^R$ are specified by the vector $R$, where the element $R(i) \in \{1,...,K\}$ defines the class of observation $i,i=1,...,m$. Similarly, $\mathbf{X}^T$ represents the test data consisting of $n$ observations, and their classification is determined by the vector $T$, which is the main target of predictive inference. Using the training data for class $k$, $\mathbf{X}^{R, k}$, a search for the SGM with optimal marginal likelihood, $G_L^k$, can be conducted \citep{Nyman13}. Given the resulting SGs for each class, $G_L^{A}$, posterior predictive versions of the equations \eqref{eq:cs} and \eqref{eq:ML} can be used to probabilistically score any candidate classification of the test data. 

We consider two types of predictive classifiers, a marginal classifier and a simultaneous classifier \citep{Corander13a}. Both assign a predictive score to the label vector $T$, which can be normalized into a posterior given a prior distribution over possible labellings. \cite{Corander13a} introduce formally various classification rules using the posterior distribution and decision theory, however, here we simply consider the \textit{maximum a posteriori} (MAP) rule under a uniform prior, which corresponds to maximization of the predictive score function. For the marginal classifier the predictive score, $P_{\text{mar}}$, is calculated as
\[
\begin{split}
P_{\text{mar}}(T \mid \mathbf{X}^T, \mathbf{X}^R, R, G_L^A) &= \prod_{i=1}^n P(\mathbf{X}_i^T \mid T, \mathbf{X}^R, R, G_L^A) \\ &=
\prod_{i=1}^n \frac{\prod_{C \in \mathcal{C}(G_L^{T(i)})}P_{C}(\mathbf{X}_{i, C}^{T} \mid \mathbf{X}^{R, T(i)})}{\prod_{S \in \mathcal{S}(G_L^{T(i)})}P_{S}(\mathbf{X}_{i, S}^{T} \mid \mathbf{X}^{R, T(i)})},
\end{split}
\]
where $\mathbf{X}_i^T$ denotes the $i$th observation in the test data, and $\mathbf{X}_{i, C}^{T}$ denotes the outcomes of the variables associated to clique $C$ in this observation. The posterior predictive likelihoods $P_{C}(\mathbf{X}_{i, C}^{T} \mid \mathbf{X}^{R, T(i)})$ and $P_{S}(\mathbf{X}_{i, S}^{T} \mid \mathbf{X}^{R, T(i)})$ are calculated using \eqref{eq:ML} and the updated hyperparameters $\beta_{jil}$ instead of $\alpha_{jil}$,
\begin{equation}
\label{eq:beta}
\beta_{jil} = \alpha_{jil} + m(x_{j}^{i} \mid \pi_{j}^{l}),
\end{equation}
where $m(x_{j}^{i} \mid \pi_{j}^{l})$ is the number of observations in $\mathbf{X}^{R, T(i)}$ where the outcome of variable $X_j$ is $i$ given that the observed outcome of the parents of $X_j$ belongs to group $l$. 

The optimal classification decision is obtained by the vector $T$ that maximizes the score function over all $n$ samples in the test data, i.e. 
\[
\arg \mathop{\max}_{T} P_{\text{mar}}(T \mid \mathbf{X}^T, \mathbf{X}^R, R, G_L^A).
\]
Using the simultaneous predictive classifier the observations in the test data are not classified independently of each other as is the case with the marginal classifier presented above. Instead, the score function becomes
\[
\begin{split}
P_{\text{sim}}(T \mid \mathbf{X}^T, \mathbf{X}^R, R, G_L^A) &= \prod_{k=1}^K P(\mathbf{X}^{T, k} \mid T, \mathbf{X}^R, R, G_L^A) \\ &=
\prod_{k=1}^K \frac{\prod_{C \in \mathcal{C}(G_L^k)}P_{C}(\mathbf{X}_{C}^{T, k} \mid \mathbf{X}^{R, k})}{\prod_{S \in \mathcal{S}(G_L^k)}P_{S}(\mathbf{X}_{S}^{T, k} \mid \mathbf{X}^{R, k})},
\end{split}
\]
where $\mathbf{X}^{T, k}$ denotes observation in the test data assigned to class $k$ by $T$, and $\mathbf{X}_{C}^{T, k}$ denotes the outcomes of the variables associated to clique $C$ for these observations. The posterior predictive likelihoods $P_{C}(\mathbf{X}_{C}^{T, k} \mid \mathbf{X}^{R, k})$ and $P_{S}(\mathbf{X}_{S}^{T, k} \mid \mathbf{X}^{R, k})$ are again calculated using \eqref{eq:ML} and the updated hyperparameters $\beta_{jil}$. The optimal labeling is obviously still determined by the vector $T$ that optimizes the simultaneous score function. 

Intuitively, the simultaneous classifier merges information from the test data already assigned to class $k$ with the training data of class $k$, when assessing the joint probability of observing them all. Depending on the level of complexity of the model and the size of the training and test data sets, this increases the accuracy of the classifier, as shown in \cite{Corander13a}. However, the theorems below formally establish that when the size of the training data grows, the classification decisions based on marginal and simultaneous predictive SGM classifiers become equivalent, and further that also GM and SGM based predictive classifiers become equivalent.
\begin{theorem}
\label{th:marSim}
The marginal and simultaneous predictive SGM classifiers are asymptotically equivalent as the size of the training data goes to infinity.
\end{theorem}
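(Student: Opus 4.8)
The plan is to reduce both predictive scores to products of single-observation posterior predictive likelihoods and then to show that, factor by factor, the simultaneous and marginal versions share a common limit as the training data grows. A useful preliminary observation is that both classifiers employ the same per-class stratified graphs $G_L^k$; the only difference between them lies in the conditioning data entering the predictive likelihoods, so no assumption about the convergence of the learned graph structure is needed for this comparison.

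First I would rewrite the two score functions on a common footing. The marginal score is already a product over the $n$ test observations. For the simultaneous classifier I would apply the chain rule to the joint predictive likelihood of all test observations assigned to a given class $k$: writing these observations as $x^{(1)},\ldots,x^{(n_k)}$, one obtains $P(\mathbf{X}^{T,k}\mid\mathbf{X}^{R,k}) = \prod_{r=1}^{n_k} P\!\left(x^{(r)}\mid x^{(1)},\ldots,x^{(r-1)},\mathbf{X}^{R,k}\right)$, where each factor is a single-observation predictive likelihood of the same form as \eqref{eq:cs}. After this rewriting the marginal and simultaneous scores are products of the same number of single-observation factors, differing only in their conditioning: the marginal factor conditions on the class-$k$ training data alone, whereas the simultaneous factor additionally conditions on the at most $n-1$ earlier test points already assigned to class $k$.

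Next I would express each single-observation factor through the Dirichlet posterior predictive implied by \eqref{eq:ML} and \eqref{eq:beta}. Each factor is a product, over the variables of a clique or separator, of terms of the form $(\alpha_{jil}+m(x_j^i\mid\pi_j^l))/\sum_{i'}(\alpha_{ji'l}+m(x_j^{i'}\mid\pi_j^l))$, the counts $m(\cdot)$ being accumulated from the conditioning data. The only difference between a simultaneous factor and its marginal counterpart is an additional contribution of at most $n$ to these counts, coming from the earlier test observations. Assuming the class-$k$ training data are i.i.d.\ from a fixed class-conditional distribution, the law of large numbers forces $m(x_j^i\mid\pi_j^l)$ to grow linearly in the class-$k$ training size for every parent group $l$ of positive probability, while the $\alpha_{jil}$ and the test-derived counts stay bounded. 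Hence each ratio converges to the corresponding conditional probability $P(X_j=i\mid\pi_j=l)$, and adding a bounded perturbation to both numerator and denominator leaves this limit unchanged. Each simultaneous factor and its marginal counterpart therefore converge to the same value, so $P_{\text{sim}}(T)$ and $P_{\text{mar}}(T)$ share a common limit for every fixed labeling $T$. Because $T$ ranges over the finite set $\{1,\ldots,K\}^n$, this pointwise convergence is uniform, and whenever the limiting score has a unique maximizer both classifiers select it for all sufficiently large training data, which gives the asserted equivalence.

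I expect the main obstacle to be this convergence step in the stratified setting: one must verify that grouping parent configurations into the equivalence classes $l$ underlying $\alpha_{jil}$ and the weights $\lambda_{jl}$ does not disrupt the linear growth of the grouped counts, so that the bounded self-conditioning of the simultaneous classifier genuinely washes out in the limit. A secondary point is the treatment of ties in the limiting score; these arise only on a negligible set of feature configurations and do not affect the generic decision equivalence.
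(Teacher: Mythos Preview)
Your argument is correct and reaches the same conclusion as the paper, but the route is genuinely different. The paper works directly with the closed-form clique marginal likelihoods: it writes $\log P_{\text{sim}}$ and $\log P_{\text{mar}}$ via \eqref{eq:ML} with the updated hyperparameters $\beta_{jil}$, applies Stirling's approximation $\log\Gamma(x)\approx(x-0.5)\log x - x$ to every gamma function, and then simplifies the difference $\log P_{\text{sim}}-\log P_{\text{mar}}$ algebraically until only terms of the form $(\beta-0.5)\log(1+n/\beta)$ and $n\log(1+n/\beta)$ remain; these vanish as $\beta_{jil}\to\infty$ by the standard limit $(1+x/y)^y\to e^x$. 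You instead decompose the simultaneous score by the chain rule into a product of one-step Dirichlet predictives and observe that each such factor differs from its marginal counterpart only by a bounded (at most $n$) additive perturbation to counts that diverge, so the two factors share the same limit; the finite product over $n$ test points then inherits this common limit. Your approach is more elementary and conceptually transparent---no Stirling, no gamma-function algebra---and it makes the role of the stratified grouping clear (the grouped counts $m(x_j^i\mid\pi_j^l)$ are finite sums of ungrouped counts, each growing linearly, so the grouping cannot obstruct the argument). The paper's approach, on the other hand, works directly at the level of the joint likelihoods and does not require the chain-rule rewriting or any appeal to a law of large numbers for individual cell probabilities; it only needs $\beta_{jil}\to\infty$. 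Both arguments rely on essentially the same positivity assumption on the limiting relative frequencies.
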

\begin{proof}[Theorem \ref{th:marSim}]
See Appendix B.
\end{proof}

\begin{theorem}
\label{th:GMSGM}
The predictive SGM and GM classifiers are asymptotically equivalent as the size of the training data goes to infinity.
\end{theorem}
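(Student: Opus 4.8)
\emph{Proof proposal.} The plan is to reduce the four-way comparison to a single asymptotic statement and then to exploit the convergence of the posterior predictive factors in \eqref{eq:ML}. Since an ordinary GM is precisely an SGM whose stratum collection $L$ is empty, Theorem \ref{th:marSim} already guarantees that, within each model class, the marginal and simultaneous classifiers agree asymptotically. It therefore suffices to prove that the \emph{marginal} GM and \emph{marginal} SGM classifiers become equivalent as the training size $m \to \infty$; the equivalence of all remaining pairs then follows by transitivity through Theorem \ref{th:marSim}.

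First I would examine the per-observation predictive likelihood. For a single test observation the clique factor \eqref{eq:ML}, evaluated with the updated hyperparameters $\beta_{jil}$ of \eqref{eq:beta}, collapses to a product of one-step predictive conditional probabilities of the form $\beta_{jil}/\sum_{i'}\beta_{ji'l}$. Because $\beta_{jil}=\alpha_{jil}+m(x_j^i\mid\pi_j^l)$ with the counts growing linearly in $m$ while the hyperparameters $\alpha_{jil}$ remain fixed, each such ratio converges to the empirical conditional frequency of the observed outcome of $X_j$ given its parent group, which by the law of large numbers tends to the corresponding true conditional probability under the class that generated the training data.

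The decisive step is to compare how the two models form these conditional probabilities. The \emph{only} structural difference between the GM and SGM evaluations of \eqref{eq:ML} lies in the grouping of parent configurations: the SGM pools all configurations lying in a common stratum into a single group $l$, whereas the GM keeps every configuration separate. I would show that, under the context-specific independence defining the stratum, the pooled and the un-pooled conditional frequencies share a common limit. Concretely, if an edge $\{\delta,\gamma\}$ carries a stratum at context $x_{L_{\{\delta,\gamma\}}}$, then $X_\delta\perp X_\gamma\mid X_{L_{\{\delta,\gamma\}}}=x_{L_{\{\delta,\gamma\}}}$ forces $P(X_\delta\mid X_\gamma,x_{L_{\{\delta,\gamma\}}})$ to be independent of the value of $X_\gamma$, so the pooled estimate $\#(X_\delta,x_L)/\#(x_L)$ and each separate estimate $\#(X_\delta,X_\gamma=v,x_L)/\#(X_\gamma=v,x_L)$ all converge to the same quantity $P(X_\delta\mid x_L)$. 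Hence every one-step predictive factor, and therefore the entire per-observation predictive likelihood, converges to the identical limit under both models.

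Finally, since the test set has fixed finite size $n$ and the number of classes $K$ is finite, there are only finitely many candidate labelings $T$. I would sum the logarithms of the converging per-observation factors and conclude that the marginal log-score of each labeling tends, under \emph{both} models, to the common limit $\sum_{i=1}^n\log P^{T(i)}(\mathbf{X}_i^T)$; the maximizing labeling therefore coincides once $m$ is large enough, which is the asserted equivalence. The hard part will be justifying the shared limit of the pooled and un-pooled estimates: this hinges on the strata encoding genuine context-specific independencies of the underlying distribution, which, if the stratified graph $G_L^k$ is itself learned from the training data, requires a consistency argument showing that spurious strata are asymptotically penalized by the marginal likelihood and hence not retained. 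A minor additional caveat is that, as usual for MAP rules, the equivalence is understood up to ties in the limiting scores.
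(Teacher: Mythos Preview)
Your proposal is correct and reaches the same conclusion as the paper, but by a different route. The paper works directly with the \emph{simultaneous} score, expands $\log P_{\text{SGM}}-\log P_{\text{GM}}$ via Stirling's approximation, and shows the difference vanishes by pairing each parent outcome $h$ in a stratum group $l$ with the pooled group and using that the ratios $(n(\pi_j^h)+\sum_i\beta_{jih})/(n(\pi_j^l)+\sum_i\beta_{jil})$ and $(n(x_j^i\mid\pi_j^h)+\beta_{jih})/(n(x_j^i\mid\pi_j^l)+\beta_{jil})$ share a common limit $\zeta_{jh}$. You instead reduce to the \emph{marginal} classifiers via Theorem~\ref{th:marSim} and observe that for a single test observation the clique factor is simply $\beta_{jil}/\sum_{i'}\beta_{ji'l}$, so no Stirling expansion is needed; the equality of limits then follows directly from the law of large numbers plus the context-specific independence. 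Your route is shorter and more transparent; the paper's explicit computation has the virtue of displaying exactly which terms cancel. Both arguments rest on the same unproved premise---that the learned GM and SGM share their underlying graph and that every retained stratum encodes a genuine context-specific independence---which the paper simply asserts as ``a fair assumption'' and you correctly flag as requiring a model-selection consistency argument. One small point you do not mention: the separator factors are automatically identical under the two models because, by the definition of a decomposable SG, no separator edge carries a stratum.
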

\begin{proof}[Theorem \ref{th:GMSGM}]
See Appendix C.
\end{proof}

The vector $T$ that optimizes the predictive score is identified using the same methods as in \cite{Corander13a, Corander13b}. For the marginal classifier we cycle through all observations in the test data assigning each one to the class optimizing the predictive score $P_{\text{mar}}$. For the simultaneous classifier we begin by initializing a start value for $T$, this vector may be generated randomly or, for instance, chosen as the vector that maximizes the score for the marginal classifier. The elements in $T$ are then changed such that $P_{\text{sim}}$ is successively optimized for each element. This procedure is terminated once an entire cycle, where each element in $T$ is considered once, is completed without evoking any changes in $T$.

In the next section we will demonstrate how the marginal and simultaneous SGM classifiers compare to each other. They will also be compared to predictive classifiers based on ordinary GMs as well as the predictive naive Bayes classifier, which is equal to the GM classifier with the empty graph. 

\section{Numerical experiments}
\label{sec:results}

The synthetic data used in the following examples is generated from five different classes. The variables in each class are associated with a unique dependence structure. In each class a group of five variables constitutes a chain component, variables in different chain components are independent of each other. For a given class, the variables in each of the chain components follow the same dependence structure and distribution. This framework makes it possible to easily construct datasets with a larger number of variables by combining any desired amount of chain components. The dependence structure for the variables for each of the five classes follows that of the SGs in Figure \ref{classes}. Note that instead of writing a condition as $(X_1=1, X_2=1)$, in order to save space it is sufficient to write $(1, 1)$, as it is uniquely determined which nodes are adjacent to both nodes in a stratified edge and the variables are ordered topologically. Also a condition where $X_{\delta}=0$ or $X_{\delta}=1$ is written as $X_{\delta}=*$.
\begin{figure}[h]
\begin{center}
\includegraphics{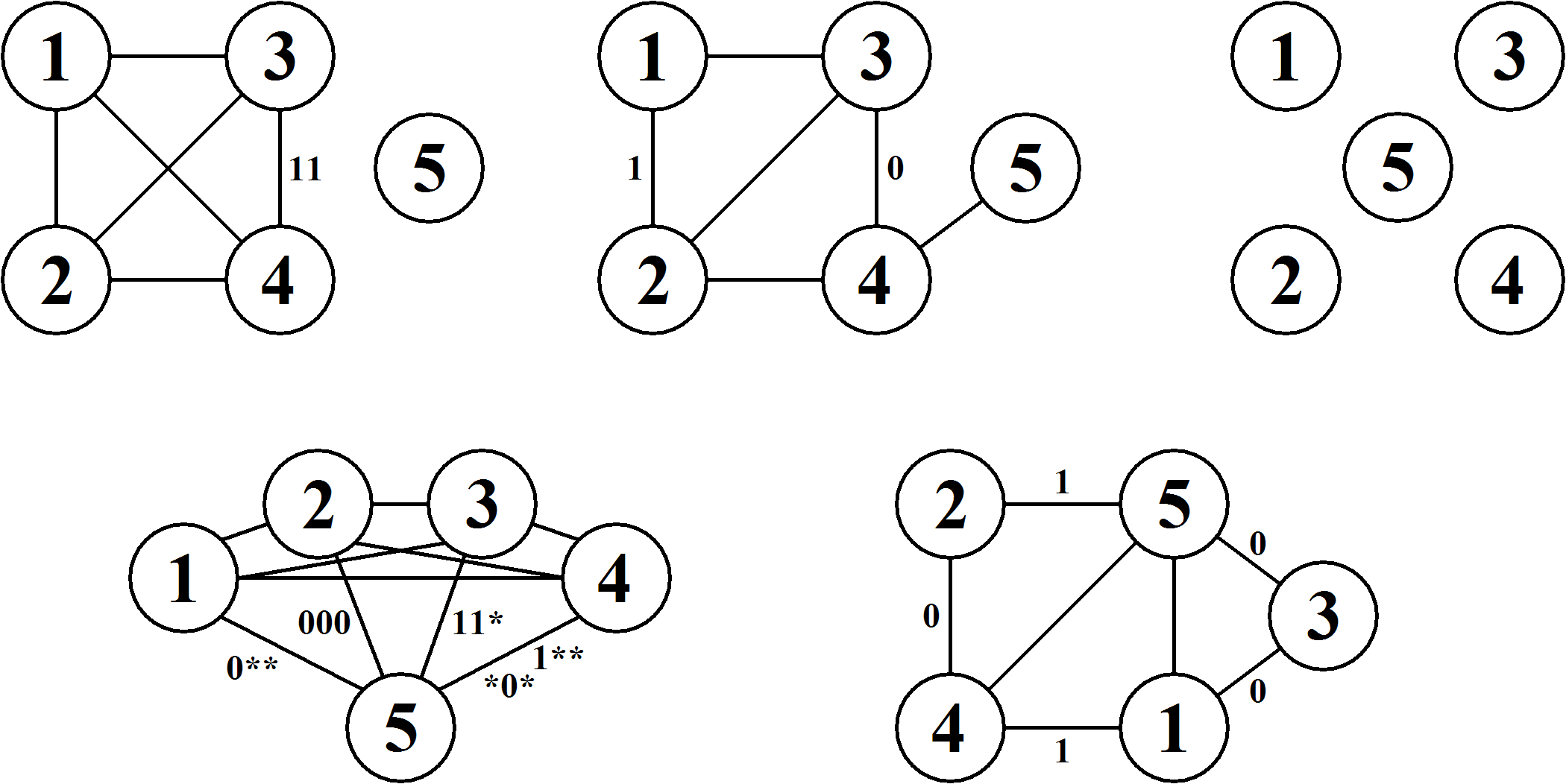}
\end{center}
\caption{Dependence structure for the variables in the five different classes.}
\label{classes}
\end{figure}
The probability distributions used for each class is available as Online Resource 1.

\begin{figure}[htb]
\begin{center}
\includegraphics{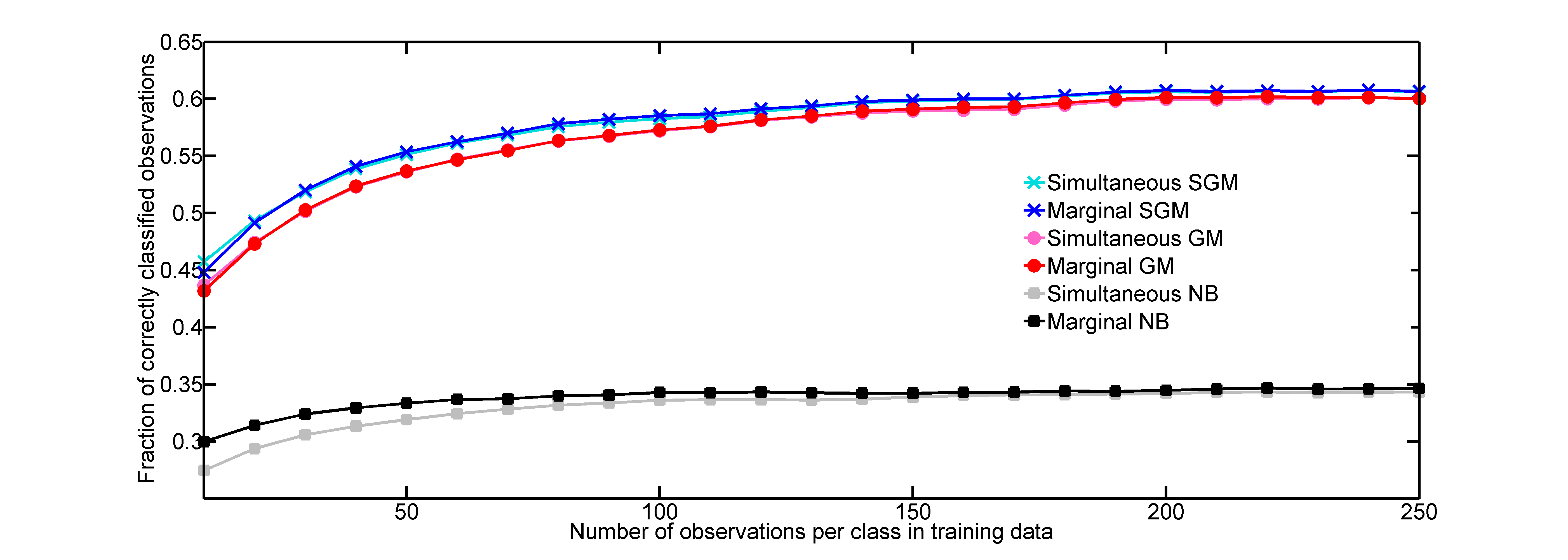}
\end{center}
\caption{Success rates of six classifier methods using 20 features. The number of training data observations per class ranges from 10 to 250, the number of test data observations per class is fixed to 20.}
\label{test20}
\end{figure}

Training data and test data are generated from the five different classes. In the first experiment the number of features is set equal to 20 and we fix the number of observations per class in the test data to 20, while letting the number of observations per class in the training data vary from 10 to 250. Here we make the simplifying assumption that the dependence structure, as encoded by GMs and SGMs, is known for each class. Marginal and simultaneous classifiers were then applied to 200 similarly generated training and test data sets with the average resulting success rates for each classifier displayed in Figure \ref{test20}.

While the differences between the naive Bayes classifiers and the classifiers utilizing the GM or SGM structures are very large, there is also a non-negligible difference between the GM and the SGM classifiers. However, the differences between the marginal and simultaneous classifiers are very small (curves practically overlap) in both the GM and SGM cases. This can be explained by two main reasons. Firstly, the size of the test data is small compared to the training data, meaning that the extra knowledge gained from the test data in the simultaneous case is relatively small. Secondly, the fraction of correctly classified observations is quite low due to the small number of features, meaning that the test data may have an unstabilizing effect on the predictive inference where class-conditional parameters are integrated out.

\begin{figure}[htb]
\begin{center}
\includegraphics{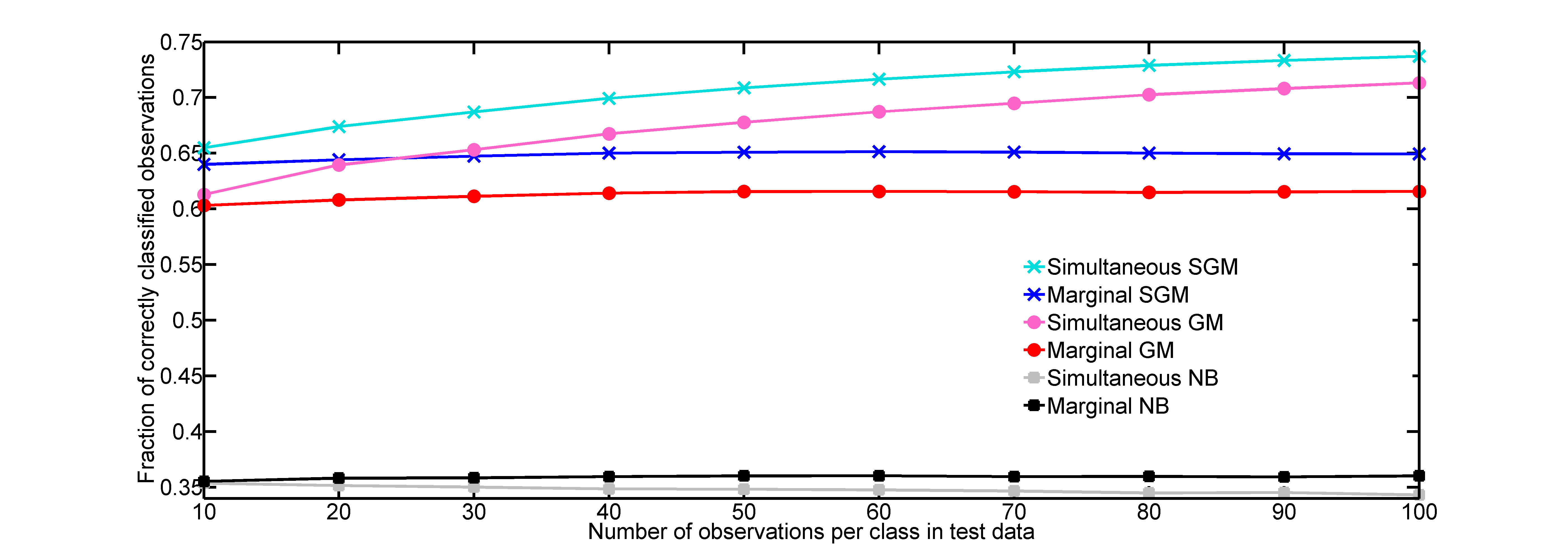}
\end{center}
\caption{The success rates of six different classifier methods using 50 features. The number of test data observations per class ranges from 10 to 100 and the number of training data observations per class is fixed to 20.}
\label{train20}
\end{figure}

In the second experiment we increase the number of features to 50 and let the number of test data observations per class vary from 10 to 100, while fixing the number of training data observations per class to 20. The resulting average success rates for the different classifiers are shown in Figure \ref{train20}. As expected, we see that when the amount of features is increased, the rate of correctly classified observations increases as well. This, in turn, leads to the simultaneous SGM and GM classifiers outperforming their marginal counterparts regardless of the sizes of the training and test data. Moreover, when the dependence structure is known, the SGM classifier perform consistently better than the GM classifier.

\begin{figure}[htb]
\begin{center}
\includegraphics{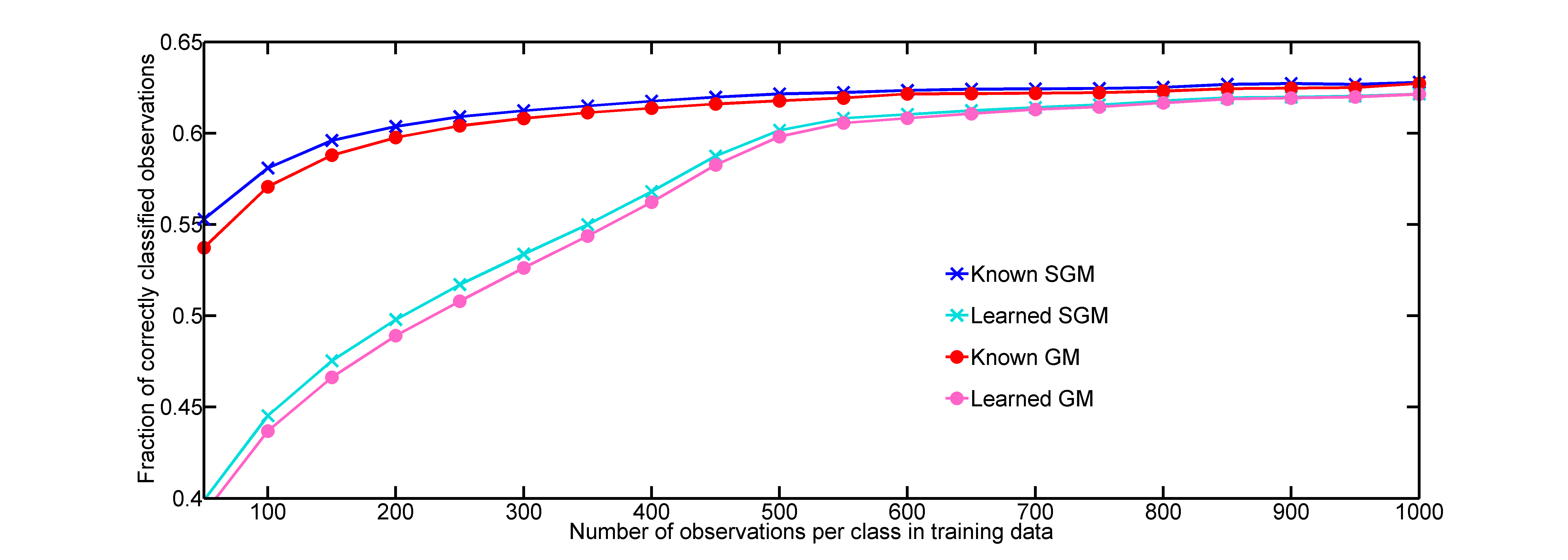}
\end{center}
\caption{The success rates of four different marginal classifier methods. For some classifiers the dependence structure is assumed to be known whilst for others it has to be learned from the training data.}
\label{marginal}
\end{figure}

In the next experiment, to avoid excessive computational demands, we focus on the faster marginal classifiers with the added difficulty that we now also need to learn the dependence structure (GM and SGM) from the training data for each of the 1,000 replicated data sets per training data size. In order to keep the required simulation time tractable, we again limit the number of features to 20. As we in this case only use the marginal classifiers, the size of the test data is irrelevant regarding the computational complexity and resulting success rates. The number of observations per class in the training data is set to vary from 50 to 1000. The results from this test are displayed in Figure \ref{marginal}. The marginal classifiers using the known GM and SGM structures have also been included as a baseline reference. We can immediately see that the classifiers where the structure is learned using the training data have lower success rates than when the structure is known, which is expected. However, we nevertheless see that the SGM classifiers perform better than GM classifiers, irrespectively of whether the dependence structure is learned from the training data or fixed. The results also visualize the asymptotic behavior established in Theorem \ref{th:marSim} and \ref{th:GMSGM}, as the success rates become identical for all the considered classifiers as the size of the training data grows.

Two real datasets are now used to compare performances of the different predictive classifiers. First, the marginal classifiers (naive Bayes, GM, and SGM) are applied to a dataset representing the genetic variation in the bacterial genus Vibrionaceae \citep{Dawyndt05}. The dataset consists of 507 observations on 994 binary fAFLP markers. Using 10 estimation runs of the basic clustering module in BAPS software \citep{Corander06Bayesian, Corander08enhanced} with \textit{a priori} upper bound of the number of clusters ranging between 20-150, the data were split into a total of 80 classes. The ten classes containing most data, between 12-44 observations, were further used in a classification experiment.

Because of the large number of features in the Vibrionaceae data, learning the dependence structure poses a considerable computational challenge. Therefore, to simplify the computations, the features are separated into ten groups, with nine groups of 100 and one group of 94 features. When learning the undirected graph representing the dependence structure in each group and class we also restricted the maximum clique size to five nodes. This restriction has only little effect in practice since the identified structures mostly include small cliques. Once the optimal undirected graphs are obtained, a search for the optimal strata is conducted with the given undirected graph used as the underlying graph. With all these restrictions the resulting GMs and SGMs can by no means be considered to be the corresponding global optima, however, they serve well enough to illustrate the classifier performance.

To assess the accuracy of the different classifiers, the data are randomly split into two sets, the test data containing two observations from each class and the training data comprising the remaining observations. As the test data is relatively small compared to the training data it is reasonable to assume that the marginal and simultaneous classifiers would perform almost identically and therefore only the marginal classifiers are considered. First, all the features were utilized in the classification, resulting in a success rate of 100 percent. While this is an encouraging result it is useless for comparing the different classification methods. Therefore, the classification was then done separately for each of the ten groups of features, which reduces the number of features to 1/10 of the original data set. The resulting average success rates over multiple simulations are displayed in Table \ref{tab:vibrio}.
\begin{table}[htb]
\begin{center}
\begin{tabular}{cccc}
\hline
Group & SGM & GM & naive Bayes \\
\hline
1 & 0.9911 & 0.9870 & 0.9673 \\
2 & 1.0000 & 0.9988 & 0.9208 \\
3 & 0.9982 & 0.9983 & 0.9094 \\
4 & 1.0000 & 1.0000 & 0.9379 \\
5 & 0.9964 & 0.9949 & 0.9134 \\
6 & 0.9908 & 0.9876 & 0.9039 \\
7 & 0.9901 & 0.9884 & 0.9105 \\
8 & 0.9975 & 0.9963 & 0.8624 \\
9 & 0.9228 & 0.9169 & 0.8671 \\
10 & 0.6581 & 0.6414 & 0.5256 \\
\hline
All & 0.9545 & 0.9510 & 0.8718 \\
\hline
\end{tabular}
\end{center}
\caption{Success rates of the marginal classifiers applied to groups of 100 features in the Vibrionaceae data.}
\label{tab:vibrio}
\end{table}

While it is clear that the GM and SGM classifiers have quite similar success rates, the SGM classifier consistently performs better. It is worthwhile to note that the search for the optimal SGM structure is more complicated than that for the GM structure making it more susceptible to errors. Most likely this means that a more extensive search would further favor the SGM classifier. 

The second real dataset that we consider is derived from the answers given by 1444 candidates in the Finnish parliament elections of 2011 in a questionnaire issued by the newspaper Helsingin Sanomat \citep{HelsinginSanomat11}. The dataset, which contains 25 features, is available as Online Resource 2. The candidates belong to one of the eight political parties, listed in Appendix D, whose members were subsequently elected to the parliament. Using the entire data the optimal GM and SGM structures are inferred for each class (political party). Once the dependence structure is known for each class, a single candidate is sequentially classified while the remaining candidates constitute the training data. As the test data in this case consists of a single observation the marginal and simultaneous classifiers are identical. The SGM classifier managed to correctly classify 1084 candidates (75.1$\%$), while the corresponding numbers for the GM and naive Bayes classifiers are 1053 (72.9$\%$) and 974 (67.5$\%$), respectively.
\begin{table}[htb]
\begin{center}
\begin{tabular}{c|cccccccc}
\hline
Political party & 1 & 2 & 3 & 4 & 5 & 6 & 7 & 8 \\
\hline
1 & 163 & 1 & 16 & 10 & 10 & 5 & 0 & 1 \\
2 & 0 & 179 & 0 & 0 & 10 & 2 & 2 & 2 \\
3 & 5 & 0 & 57 & 0 & 1 & 1 & 1 & 2 \\
4 & 13 & 5 & 7 & 138 & 23 & 4 & 3 & 7 \\
5 & 10 & 10 & 1 & 14 & 110 & 11 & 7 & 1 \\
6 & 2 & 14 & 2 & 3 & 13 & 124 & 39 & 11 \\
7 & 0 & 3 & 1 & 4 & 9 & 18 & 143 & 14 \\
8 & 1 & 0 & 5 & 7 & 4 & 9 & 16 & 170 \\
\hline
\end{tabular}
\end{center}
\caption{Resulting class assignment for parliament election data using the SGM classifier.}
\label{tab:hs}
\end{table}
Table \ref{tab:hs} lists how the candidates are assigned to the different parties by the SGM classifier. The element in row $i$ and column $j$ is the number of candidates belonging to party $i$ assigned to party $j$. Parties 1-5 can be considered to be the conservative or moderate parties while parties 6-8 can be considered the more liberal parties. Interestingly, out of the erroneously classified candidates belonging to the conservative parties 136 are assigned to another conservative party, while only 49 are assigned to liberal parties. Similarly, out of the erroneously classified liberal candidates 107 are assigned to other liberal parties and 68 are assigned to conservative parties, respectively.

\section{Discussion}
\label{sec:conclusion}

We introduced a predictive Bayesian classifier that utilizes the dependence structure of the observed features to enhance the accuracy of classification, by allowing a more faithful representation of the data generative process. Albeit we did not consider it explicitly, an additional beneficial characteristic of such an approach is that the uncertainty of the class labels is then more appropriately characterized by the predictive distribution, which may be important in certain applications of sensitive nature and where asymmetric losses are used for erroneous classification decisions across different classes. For a general discussion of this issue, see \cite{Ripley96}. While the naive predictive Bayes classifier is simple and straightforward to use, it often oversimplifies the problem by assuming independence among the features, which has been widely acknowledged in the literature. GM classifiers attempt to rectify the problem by introducing a dependence structure for the features. However, the family of dependence structures that can be modeled using GMs can in some cases be to rigid. The ability to include context-specific independencies among the features, introduced by SGMs, allows for a more precise and sparse representation of the dependence structure.

The results presented in this paper demonstrate the potential of SGM classifiers to improve the rate of success with which the items are classified. Additionally, it is shown that when the data includes a sufficient amount of features, leading to a high success rate of classification, a simultaneous classifier is advantageous compared to the separate classification of each sample which is the standard approach.

In future research it would be interesting to consider SGM classifiers in the context of sequentially arising data, such as discussed in \cite{Corander13b}. Kernel methods (see e.g. \cite{Bishop07pattern}) would also possibly allow a generalization of the context-specific dependence to continuous variables in the sequential case. However, since such methods are generally computation intensive, very efficient fast approximations would need to be used in online type applications, such as in speech recognition and other similar sequential signal processing \citep{Huo00, Maina11}.

\begin{appendix}

\section*{Appendix A: example illustrating the induced effect of strata on the dependence structure}
Consider a clique in a decomposable SG consisting of $d$ nodes. It follows from the definition of decomposable SGs that all stratified edges in this clique have at least one node in common. We introduce an ordering of the variables, corresponding to the nodes of such a clique, such that the last variable in the ordering corresponds to the node found in all stratified edges. This variable is denoted by $X_d$. All the changes to the dependence structure caused by the inclusion of strata can be seen in the conditional distribution $P(X_d \mid X_1, \ldots, X_{d-1})$. In the absence of strata, each outcome of the variables $(X_1, \ldots, X_{d-1})$, termed as parents of $X_d$ and denoted by $\Pi_d$, would induce a unique conditional distribution. However, each outcome in a stratum on an edge serves to merge a subset of these outcomes. For instance, if all variables are considered to be binary, the condition $(X_1 = 0, \ldots, X_{d-2} = 0)$ on the edge $\{d-1, d\}$ will merge the outcomes $(X_1 = 0, \ldots, X_{d-2} = 0, X_{d-1} = 0)$ and $(X_1 = 0, \ldots, X_{d-2} = 0, X_{d-1} = 1)$.

This merging process is best illustrated using conditional probability tables (CPTs). As an example, consider the clique $\{2, 3, 4\}$ in the ordinary graph and SG in Figure \ref{fig:GMSGM}. Variable $X_4$ is by needs chosen to be last in the ordering. Table \ref{tab:CPTs} contains the CPTs for $X_4$ for both the ordinary graph and the SG.
\begin{table}
\begin{tabular}{ccccc}
\hline
Outcome & $X_2$ & $X_3$ & $P(X_4 \mid X_2, X_3)$ in GM & $P(X_4 \mid X_2, X_3)$ in SGM  \\
\hline
(1) & 0 & 0 & $p_1$ & $q_1$ \\
(2) & 0 & 1 & $p_2$ & $q_1$ \\
(3) & 1 & 0 & $p_3$ & $q_1$ \\
(4) & 1 & 1 & $p_4$ & $q_2$ \\
\hline
\end{tabular}
\caption{Corresponding CPTs for $X_4$ in the graph and SG in Figure \ref{fig:GMSGM}.}
\label{tab:CPTs}
\end{table}
The condition $X_2 = 0$ on the edge $\{3, 4\}$ merges outcomes $(1)$ and $(2)$, while the condition $X_3 = 0$ on the edge $\{2, 4\}$ merges outcomes $(1)$ and $(3)$. This results in $X_4$ having only two unique conditional distributions, one for the group of outcomes $(X_2 = 0, X_3 = 0)$, $(X_2 = 0, X_3 = 1)$, and $(X_2 = 1, X_3 = 0)$ and one for the outcome $(X_2 = 1, X_3 = 1)$.

\section*{Appendix B: proof of Theorem \ref{th:marSim}}
To prove Theorem \ref{th:marSim} it suffices to consider a single class $k$ and a single clique in $G_L^k$. If the scores for the marginal and simultaneous classifiers are asymptotically equivalent for an arbitrary clique and class it automatically follows that the scores for the whole system are asymptotically equivalent. We start by considering the simultaneous classifier. The training data $\mathbf{X}^R$ and test data $\mathbf{X}^T$ are now assumed to cover only one clique of an SG in one class. Looking at $\log P_{\text{sim}}(\mathbf{X}^T \mid \mathbf{X}^R)$ using \eqref{eq:ML} we get
\[
\log P_{\text{sim}}(\mathbf{X}^T \mid \mathbf{X}^R)=
\sum_{j = 1}^{d} \sum_{l = 1}^{q_{j}} \log \frac{\Gamma(\sum_{i = 1}^{k_{j}} \beta_{jil})}{\Gamma(n(\pi_{j}^{l}) + \sum_{i = 1}^{k_{j}} \beta_{jil})} +
\sum_{j = 1}^{d} \sum_{l = 1}^{q_{j}} \sum_{i = 1}^{k_{j}} \log \frac{\Gamma(n(x_{j}^{i} \mid \pi_{j}^{l}) + \beta_{jil})}{\Gamma(\beta_{jil})}.
\]
Using Stirling's approximation, $\log \Gamma(x)=(x - 0.5) \log(x) - x$, this equals
\[
\begin{split}
& \sum_{j = 1}^{d} \sum_{l = 1}^{q_{j}} \left( \left(\sum_{i = 1}^{k_{j}} \beta_{jil} - 0.5 \right) \log\left(\sum_{i = 1}^{k_{j}} \beta_{jil} \right) - \sum_{i = 1}^{k_{j}} \beta_{jil} \right) \\
- & \sum_{j = 1}^{d} \sum_{l = 1}^{q_{j}} \left( \left(n(\pi_{j}^{l}) + \sum_{i = 1}^{k_{j}} \beta_{jil} - 0.5 \right) \log\left(n(\pi_{j}^{l}) + \sum_{i = 1}^{k_{j}} \beta_{jil} \right) - n(\pi_{j}^{l}) - \sum_{i = 1}^{k_{j}} \beta_{jil} \right) \\
+ & \sum_{j = 1}^{d} \sum_{l = 1}^{q_{j}} \sum_{i = 1}^{k_{j}} \left( \left(n(x_{j}^{i} \mid \pi_{j}^{l}) + \beta_{jil} - 0.5 \right) \log \left( n(x_{j}^{i} \mid \pi_{j}^{l}) + \beta_{jil} \right) - n(x_{j}^{i} \mid \pi_{j}^{l}) - \beta_{jil} \right) \\
- & \sum_{j = 1}^{d} \sum_{l = 1}^{q_{j}} \sum_{i = 1}^{k_{j}} \left( (\beta_{jil} - 0.5) \log(\beta_{jil}) - \beta_{jil} \right)
\end{split}
\]
\[
\begin{split}
= - & \sum_{j = 1}^{d} \sum_{l = 1}^{q_{j}} \left( \left(\sum_{i = 1}^{k_{j}} \beta_{jil} - 0.5 \right) \log\left(1 + \frac{n(\pi_{j}^{l})}{\sum_{i = 1}^{k_{j}} \beta_{jil}} \right) + n(\pi_{j}^{l}) \log\left(n(\pi_{j}^{l}) + \sum_{i = 1}^{k_{j}} \beta_{jil} \right) \right) \\
+ & \sum_{j = 1}^{d} \sum_{l = 1}^{q_{j}} \sum_{i = 1}^{k_{j}} \left( (\beta_{jil} - 0.5) \log \left(1 + \frac{n(x_{j}^{i} \mid \pi_{j}^{l})}{\beta_{jil}} \right) + n(x_{j}^{i} \mid \pi_{j}^{l}) \log \left( n(x_{j}^{i} \mid \pi_{j}^{l}) + \beta_{jil} \right) \right).
\end{split}
\]
When looking at the marginal classifier we need to summarize over each single observation $\mathbf{X}_h^T$. We use $h(\pi_{j}^{l})$ to denote if the outcome of the parents of variable $X_j$ belongs to group $l$ and $h(x_{j}^{i} \mid \pi_{j}^{l})$ to denote if the outcome of $X_j$ is $i$ given that the observed outcome of the parents belongs to $l$. Observing that $h(\pi_{j}^{l})$ and $h(x_{j}^{i} \mid \pi_{j}^{l})$ are either 0 or 1 we get the result 
\[
\begin{split}
& \log P_{\text{mar}}(\mathbf{X}^T \mid \mathbf{X}^R) = \sum_{h=1}^n \log P(\mathbf{X}_h^T \mid \mathbf{X}^R) \\
= & \sum_{h=1}^n \sum_{j = 1}^{d} \sum_{l = 1}^{q_{j}} \log \frac{\Gamma(\sum_{i = 1}^{k_{j}} \beta_{jil})}{\Gamma(h(\pi_{j}^{l}) + \sum_{i = 1}^{k_{j}} \beta_{jil})} +
\sum_{h=1}^n \sum_{j = 1}^{d} \sum_{l = 1}^{q_{j}} \sum_{i = 1}^{k_{j}} \log \frac{\Gamma(h(x_{j}^{i} \mid \pi_{j}^{l}) + \beta_{jil})}{\Gamma(\beta_{jil})} \\
= - & \sum_{j = 1}^{d} \sum_{l = 1}^{q_{j}} n(\pi_{j}^{l}) \log \left( \sum_{i = 1}^{k_{j}} \beta_{jil} \right) +
\sum_{j = 1}^{d} \sum_{l = 1}^{q_{j}} \sum_{i = 1}^{k_{j}} n(x_{j}^{i} \mid \pi_{j}^{l}) \log (\beta_{jil}).
\end{split}
\]
If we look at the difference $\log P_{\text{sim}}(\mathbf{X}^T \mid \mathbf{X}^R) - \log P_{\text{mar}}(\mathbf{X}^T \mid \mathbf{X}^R)$  we get
\[
\begin{split}
= - & \sum_{j = 1}^{d} \sum_{l = 1}^{q_{j}} \left(\sum_{i = 1}^{k_{j}} \beta_{jil} - 0.5 \right) \log\left(1 + \frac{n(\pi_{j}^{l})}{\sum_{i = 1}^{k_{j}} \beta_{jil}} \right) \\
+ & \sum_{j = 1}^{d} \sum_{l = 1}^{q_{j}} \sum_{i = 1}^{k_{j}} (\beta_{jil} - 0.5) \log \left(1 + \frac{n(x_{j}^{i} \mid \pi_{j}^{l})}{\beta_{jil}}\right) \\
+ & \sum_{j = 1}^{d} \sum_{l = 1}^{q_{j}} \left( n(\pi_{j}^{l}) \log \left( \sum_{i = 1}^{k_{j}} \beta_{jil} \right) - n(\pi_{j}^{l}) \log\left(n(\pi_{j}^{l}) + \sum_{i = 1}^{k_{j}} \beta_{jil} \right) \right) \\
+ & \sum_{j = 1}^{d} \sum_{l = 1}^{q_{j}} \sum_{i = 1}^{k_{j}} \left( n(x_{j}^{i} \mid \pi_{j}^{l}) \log \left( n(x_{j}^{i} \mid \pi_{j}^{l}) + \beta_{jil} \right) - n(x_{j}^{i} \mid \pi_{j}^{l}) \log (\beta_{jil}) \right).
\end{split}
\]
Under the assumption that all the limits of relative frequencies of feature values are strictly positive under an infinitely exchangeable sampling process of the training data, i.e. all hyperparameters  $\beta_{jil} \rightarrow \infty$ when the size of the training data $m \rightarrow \infty$. Using the standard limit $\lim_{y \rightarrow \infty} (1+x/y)^y = e^x$ results in
\[
\begin{split}
& \lim_{m \rightarrow \infty} \log P_{\text{sim}}(\mathbf{X}^T \mid \mathbf{X}^R) - \log P_{\text{mar}}(\mathbf{X}^T \mid \mathbf{X}^R) \\
= - & \sum_{j = 1}^{d} \sum_{l = 1}^{q_{j}} n(\pi_{j}^{l}) + \sum_{j = 1}^{d} \sum_{l = 1}^{q_{j}} \sum_{i = 1}^{k_{j}} n(x_{j}^{i} \mid \pi_{j}^{l}) = 0.
\end{split}
\]
\qed

\section*{Appendix C: proof of Theorem \ref{th:GMSGM}}
This proof follows largely the same structure as the proof of Theorem \ref{th:marSim} and covers the simultaneous score. It is assumed that the underlying graph of the SGM coincides with the GM, this is a fair assumption since when the size of the training data goes to infinity this property will hold for the SGM and GM maximizing the marginal likelihood. Again we consider only a single class $k$ and a single clique in $G_L^k$, using the same reasoning as in the proof above. Additionally, it will suffice to consider the score for the last variable $X_d$ in the ordering, the variable corresponding to the node associated with all of the stratified edges, and a specific parent configuration $l$ of the parents $\Pi_d$ of $X_d$. The equation for calculating the score for variables $X_1, \ldots, X_{d-1}$ will be identical using either the GM or the SGM. If the asymptotic equivalence holds for an arbitrary parent configuration it automatically holds for all parent configurations. Under this setting we start by looking at the score for the SGM
\[
\log P_{\text{SGM}}(\mathbf{X}^T \mid \mathbf{X}^R)=
\log \frac{\Gamma(\sum_{i = 1}^{k_{j}} \beta_{jil})}{\Gamma(n(\pi_{j}^{l}) + \sum_{i = 1}^{k_{j}} \beta_{jil})} +
\sum_{i = 1}^{k_{j}} \log \frac{\Gamma(n(x_{j}^{i} \mid \pi_{j}^{l}) + \beta_{jil})}{\Gamma(\beta_{jil})},
\]
which using Stirling's approximation and the same techniques as in the previous proof equals
\[
\begin{split}
- & \left(\sum_{i=1}^{k_j}\beta_{jil}-0.5\right) \log\left(1 + \frac{n(\pi_j^l)}{\sum_{i=1}^{k_j}\beta_{jil}}\right) - n(\pi_j^l) \log\left(n(\pi_j^l) + \sum_{i=1}^{k_j}\beta_{jil}\right) \\
+ & \sum_{i=1}^{k_j} n(x_j^i \mid \pi_j^l) \log(n(x_j^i \mid \pi_j^l) + \beta_{jil}) + \sum_{i=1}^{k_j} (\beta_{jil} - 0.5) \log\left(1+\frac{n(x_j^i \mid \pi_j^l)}{\beta_{jil}}\right).
\end{split}
\]
When studying the GM score we need to separately consider each outcome in the parent configuration $l$. Let $h$ denote such an outcome in $l$ with the total number of outcomes in $l$ totaling $q_l$. We then get the following score for the GM,
\[
\log P_{\text{GM}}(\mathbf{X}^T \mid \mathbf{X}^R)=
\sum_{h=1}^{q_l} \log \frac{\Gamma(\sum_{i = 1}^{k_{j}} \beta_{jih})}{\Gamma(n(\pi_{j}^{h}) + \sum_{i = 1}^{k_{j}} \beta_{jih})} +
\sum_{h=1}^{q_l} \sum_{i = 1}^{k_{j}} \log \frac{\Gamma(n(x_{j}^{i} \mid \pi_{j}^{h}) + \beta_{jih})}{\Gamma(\beta_{jih})}.
\]
Which, using identical calculations as before, equals
\[
\begin{split}
- & \sum_{h=1}^{q_l} \left(\sum_{i=1}^{k_j}\beta_{jih}-0.5\right) \log\left(1 + \frac{n(\pi_j^h)}{\sum_{i=1}^{k_j}\beta_{jih}}\right) -\sum_{h=1}^{q_l} n(\pi_j^h) \log\left(n(\pi_j^h) + \sum_{i=1}^{k_j}\beta_{jih}\right) \\
+ & \sum_{h=1}^{q_l}\sum_{i=1}^{k_j} n(x_j^i \mid \pi_j^h) \log(n(x_j^i \mid \pi_j^h) + \beta_{jih}) + \sum_{h=1}^{q_l}\sum_{i=1}^{k_j} (\beta_{jih} - 0.5) \log\left(1 + \frac{n(x_j^i \mid \pi_j^h)}{\beta_{jih}}\right).
\end{split}
\]
Considering the difference $\log P_{\text{SGM}}(\mathbf{X}^T \mid \mathbf{X}^R) - \log P_{\text{GM}}(\mathbf{X}^T \mid \mathbf{X}^R)$ we get
\[
\begin{split}
- & \left(\sum_{i=1}^{k_j}\beta_{jil}-0.5\right) \log\left(1 + \frac{n(\pi_j^l)}{\sum_{i=1}^{k_j}\beta_{jil}}\right) +\sum_{i=1}^{k_j} (\beta_{jil} - 0.5) \log\left(1+\frac{n(x_j^i \mid \pi_j^l)}{\beta_{jil}}\right) \\
- & n(\pi_j^l) \log\left(n(\pi_j^l) + \sum_{i=1}^{k_j}\beta_{jil}\right) +\sum_{h=1}^{q_l} n(\pi_j^h) \log\left(n(\pi_j^h) + \sum_{i=1}^{k_j}\beta_{jih}\right) \\
+ & \sum_{i=1}^{k_j} n(x_j^i \mid \pi_j^l) \log(n(x_j^i \mid \pi_j^l) + \beta_{jil}) -\sum_{h=1}^{q_l}\sum_{i=1}^{k_j} n(x_j^i \mid \pi_j^h) \log(n(x_j^i \mid \pi_j^h) + \beta_{jih})  \\
+ & \sum_{h=1}^{q_l} \left(\sum_{i=1}^{k_j}\beta_{jih}-0.5\right) \log\left(1 + \frac{n(\pi_j^h)}{\sum_{i=1}^{k_j}\beta_{jih}}\right) 
-\sum_{h=1}^{q_l}\sum_{i=1}^{k_j} (\beta_{jih} - 0.5) \log\left(1 + \frac{n(x_j^i \mid \pi_j^h)}{\beta_{jih}}\right).
\end{split}
\]
Under the assumption that $\beta_{jil} \rightarrow \infty$ as $m \rightarrow \infty$, the terms in rows one and four will sum to 0 as $m \rightarrow \infty$. The remaining terms can be written
\[
\log\frac{\prod_{h=1}^{q_l} (n(\pi_j^h) + \sum_{i=1}^{k_j}\beta_{jih})^{n(\pi_j^h)}}{(n(\pi_j^l) + \sum_{i=1}^{k_j}\beta_{jil})^{n(\pi_j^l)}}
- \sum_{i=1}^{k_j} \log\frac{\prod_{h=1}^{q_l} (n(x_j^i \mid \pi_j^h) + \beta_{jih})^{n(x_j^i \mid \pi_j^h)}}{(n(x_j^i \mid \pi_j^l) + \beta_{jil})^{n(x_j^i \mid \pi_j^l)}}.
\]
Noting that $n(\pi_j^l) = \sum_{h=1}^{q_l} n(\pi_j^h)$ and $n(x_j^i \mid \pi_j^l) = \sum_{h=1}^{q_l} n(x_j^i \mid \pi_j^h)$ we get
\[
\sum_{h=1}^{q_l} n(\pi_j^h) \log\frac{ n(\pi_j^h) + \sum_{i=1}^{k_j}\beta_{jih}}{n(\pi_j^l) + \sum_{i=1}^{k_j}\beta_{jil}}
- \sum_{i=1}^{k_j} \sum_{h=1}^{q_l} n(x_j^i \mid \pi_j^h) \log\frac{n(x_j^i \mid \pi_j^h) + \beta_{jih}}{n(x_j^i \mid \pi_j^l) + \beta_{jil}}.
\]
By investigating the definition of the $\beta$ parameters in \eqref{eq:beta}, in combination with the fact that the probabilities of observing the value $i$ for variable $X_j$ given that the outcome of the parents is $h$ are identical for any outcome $h$ comprising the group $l$, we get the limits
\[
\lim_{m \rightarrow \infty} \frac{ n(\pi_j^h) + \sum_{i=1}^{k_j}\beta_{jih}}{n(\pi_j^l) + \sum_{i=1}^{k_j}\beta_{jil}} =
\lim_{m \rightarrow \infty} \frac{n(x_j^i \mid \pi_j^h) + \beta_{jih}}{n(x_j^i \mid \pi_j^l) + \beta_{jil}} = \zeta_{jh}.
\]
And subsequently as $m \rightarrow \infty$ the difference $\log P_{\text{SGM}}(\mathbf{X}^T \mid \mathbf{X}^R) - \log P_{\text{GM}}(\mathbf{X}^T \mid \mathbf{X}^R) \rightarrow$
\[
\sum_{h=1}^{q_l} \left(n(\pi_j^h) \log \zeta_{jh} - \sum_{i=1}^{k_j} n(x_j^i \mid \pi_j^h) \log \zeta_{jh} \right) = 
\sum_{h=1}^{q_l} \left(n(\pi_j^h) \log \zeta_{jh} - n(\pi_j^h) \log \zeta_{jh} \right) = 0.
\]
\qed

\section*{Appendix D: list of political parties in the Finnish parliament}
\begin{table}[h]
\begin{center}
\begin{tabular}{cc}
\hline
Label & Political party \\
\hline
1 & National Coalition Party \\
2 & Finns Party \\
3 & Swedish People's Party of Finland \\
4 & Centre Party \\
5 & Christian Democrats of Finland \\
6 & Social Democratic Party of Finland \\
7 & Left Alliance \\
8 & Green League \\
\hline
\end{tabular}
\end{center}
\caption{List of political parties that are members of the Finnish parliament.}
\label{tab:parties}
\end{table}

\end{appendix}

\section*{Acknowledgments}
H.N. and J.P. were supported by the Foundation of \AA bo Akademi University, as part of the grant for the Center of Excellence in Optimization and Systems Engineering. J.X. and J.C. were supported by the ERC grant no. 239784 and academy of Finland grant no. 251170. J.X. was also supported by the FDPSS graduate school.

\bibliographystyle{henrik}
\bibliography{biblio}

\end{document}